\def\BibTeX{{\rm B\kern-.05em{\sc i\kern-.025em b}\kern-.08em
    T\kern-.1667em\lower.7ex\hbox{E}\kern-.125emX}}
\newtheorem{mypro}{Proposition}
\newtheorem{proof}{Proof}
\begin{document}

\title{Segmenting Epipolar Line\\
}

\author{\IEEEauthorblockN{Shengjie Li}
	\IEEEauthorblockA{School of Electronic Information and\\Electrical Engineering\\Shanghai Jiao Tong University\\
		shengjie\_li@sjtu.edu.cn}
	\and
	\IEEEauthorblockN{Qi Cai}
	\IEEEauthorblockA{School of Electronic Information and\\Electrical Engineering\\Shanghai Jiao Tong University\\
		sipangyiyou@sjtu.edu.cn}
	\and
	\IEEEauthorblockN{Yuanxin Wu}
	\IEEEauthorblockA{School of Electronic Information and\\Electrical Engineering\\Shanghai Jiao Tong University\\
		yuanxin.wu@sjtu.edu.cn}
	}

\maketitle

\begin{abstract}
 Identifying feature correspondence between two images is a fundamental procedure in three-dimensional computer vision. Usually the feature search space is confined by the epipolar line. Using the cheirality constraint, this paper finds that the feature search space can be restrained to one of two or three segments of the epipolar line that are defined by the epipole and a so-called virtual infinity point.
\end{abstract}

\begin{IEEEkeywords}
feature correspondence, epipolar line, cheirality, virtual infinity point
\end{IEEEkeywords}

\section{Introduction}
Now photography has become an indispensable part of people's daily life, using the camera can help people record things in life. However, the real world is three-dimensional, and ordinary monocular camera can only record the world on a two-dimensional image by projection. Then how to use two-dimensional images to get three-dimensional information has become a very concerned problem. This involves the problem of image matching. If we can know the two-dimensional image of an object from different angles, we can find a way to recover the three-dimensional information from it. Therefore, the correspondence of two image points is a fundamental problem in computer vision. For two images without a priori information we usually use feature detection methods like SIFT\cite{lowe2004distinctive} or SURF \cite{bay2006surf} to find the correspondence between images. In general, only a few points on the image are suitable for matching, e.g., corners or edge points, and the feature matching methods are time consuming. If the matching relationship between the photos taken by two cameras is known, we can get the essential matrix and the relative pose of two cameras\cite{hartley2003multiple,nister2004efficient,stewenius2006recent,longuet1981computer}. And the epipolar constraint can be used to reduce time consumption in feature matching. The epipolar constraint reduces the search space of matching points from two dimensions to one dimension. That is to say, given one image point in one view, the corresponding image point in the other view lies on the intersection of the imaging plane of the other view and the epipolar plane passing the two view origins and the image point, namely the epipolar line \cite{hartley2003multiple}.
The methods to speed up the epipolar line search have been studied for many years. We may use image rectification to make the epipolar lines all parallel so that the search can be done on scan lines \cite{fusiello2000compact,oram2001rectification,pollefeys1999simple}, as for rectified images the disparity can be used to represent the depth of a pixel\cite{qian1997binocular,gonzalez1998neural}. However, the technique of image rectification distorts the image and loses surface details on object reconstruction \cite{blumenthal2014high} and can not be used under some circumstances. For example, when the epipoles are in the image, it is impossible to perform image rectification. Because when doing image rectification, the epipoles need to be transformed to infinity through affine transformation. If the epipoles are in the image, the points around the epipoles are also transformed to infinity, and the original image will be destroyed. Our method does not need the process of image rectification, so there are no such limitations. Our method combines epipolar line search and the cheirality constraint. The cheirality constraint, first proposed by Hartley in \cite{10.1023/A:1007984508483}, means that any point that lies in an image must lie in front of the camera producing that image, which is alternatively known as the positive depth constraint. Werner and Pajdla \cite{werner2001cheirality} give necessary and sufficient conditions for an image point set to correspond to any real imaging geometry. Agarwal and Pryhuber\cite{agarwal2020multiview} give an algebraic description of mutiview cheirality. In this paper, we use the cheirality constraint to segment epipolar line and identify the correct epipolar segment.

This paper is organized as follows: Section 2 reviews the existing concepts such as the epipolar geometry and cheirality constraint. Section 3 raises the concepts of virtual infinity point and uses it for segmenting the epipolar line. In section 3, cheirality on the epipolar line is discussed. Section 4 describes how to identify the epipolar line segment on which the corresponding image point lies. Section 5 reports the experiment results to show the effectiveness of our new method. Section 6 concludes this paper.

\section{Related Works}
Throughout this paper we assume that all cameras are calibrated and the normalized image coordinates are used unless explicitly stated otherwise.
\subsection{Epipolar Geometry}

The epipolar geometry between two cameras is illustrated in Fig.~\ref{epipolarsegment}. $X=[x,y,1]^T$ and $X'=[x',y',1]^T$ are a pair of matched image points. $X_w=[x_w,y_w,z_w,1]^T$ is the projected 3D world point relative to the first camera coordinate. $X^{'}_w=[x^{'}_w,y^{'}_w,z^{'}_w,1]^T$ is the coordinate of projected 3D world point relative to the second camera. The two-view imaging can be described as 

\begin{align}
z_w^{'}X^{'}=z_{w}RX+t
\label{epipolarconstraint}
\end{align}
where $R$ and $t$ are the rotation and translation between two camera frames, respectively.
Left multiplying \eqref{epipolarconstraint} by $X^{'T}[t]_{\times}$ yields the epipolar constraint \cite{hartley2003multiple}
\begin{align}
X^{'T}EX=0
\label{essentialmatrix}
\end{align}
where the essential matrix $E=[t]_{\times}R$ and $[t]_{\times}$ means the skew-symmetric matrix formed by $t$.

\begin{figure}[htbp]
	\centerline{\includegraphics[width=6cm,height=4.5cm]{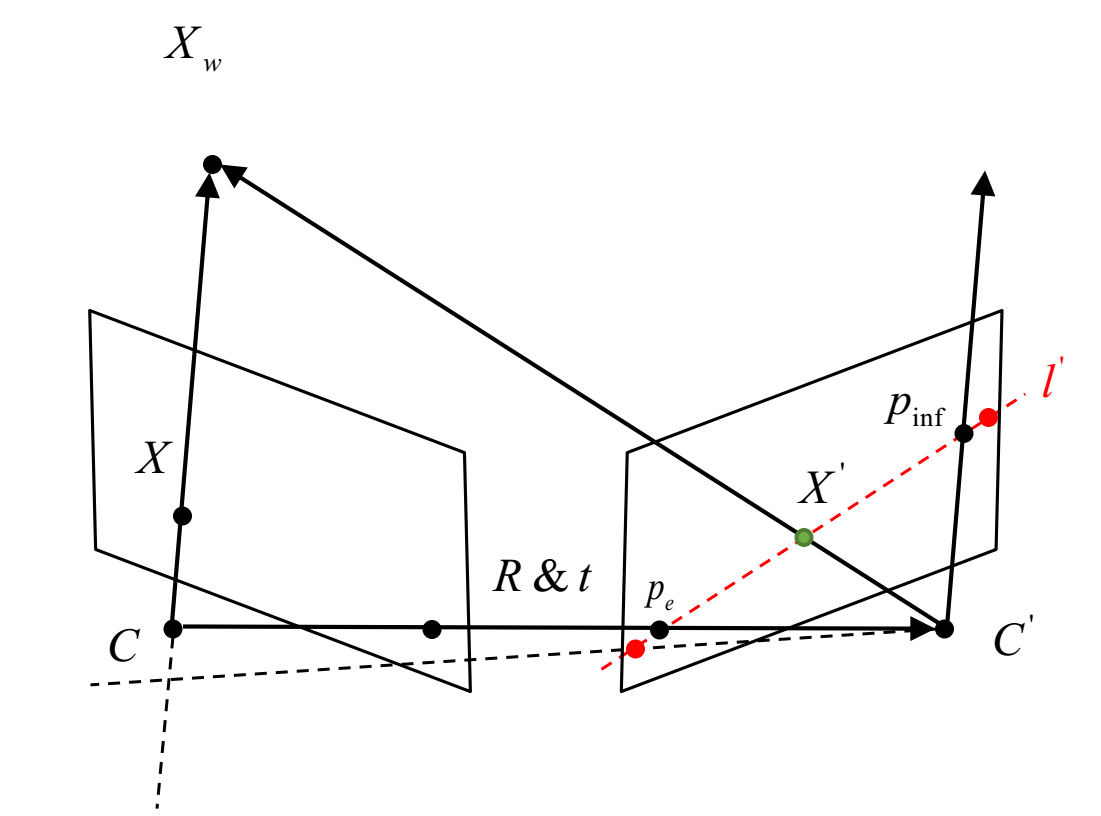}}
	\caption{An example of epipolar line, virtual infinity point and epipole in two-view geometry.}
	\label{epipolarsegment}
\end{figure}
\subsection{Epipolar Line}
Referring to \eqref{essentialmatrix}, $X^{'T}$ is restricted to fall in the left null space of $EX$ which can be described as a line on the image, namely the epipolar line. The line can be described as 
\begin{align}
l^{'}=EX=[t]_{\times}RX
\label{epipolarline}
\end{align}
In order to find a pair of matching points between two images, the epipolar line can be employed to reduce the search space from two dimensions to one dimension. It is known as the epipolar line search \cite{hartley2003multiple}. 
\subsection{Cheirality Constraint}
The cheirality constraint in two-view geometry means that a 3D world point must lie in front of both two cameras, or alternatively it must have positive depths. It is well known that we would get four sets of pose solutions from decomposing the essential matrix $E$ \cite{hartley2003multiple,wang2000svd}. The cheirality constraint is helpful in identifying the right pose solution \cite{hartley2003multiple,10.1023/A:1007984508483}.

\section{Epipolar Line  Segments}
This section will show that the epipolar line has two or three segments that are divided by the epipole and a so-called virtual infinity point.
\subsection{Virtual Infinity Point}
The virtual infinity point $p_{inf}$ is defined as 
\begin{align}
p_{inf}=\frac{RX}{[RX]_{3}}
\label{infpts}
\end{align}
where $[RX]_3$ is the third row of $RX$. In other words, the virtual infinity point $p_{inf}$ is $RX$ represented by the homogeneous coordinate.
\begin{mypro}
	The virtual infinity point lies on the epipolar line.
\end{mypro}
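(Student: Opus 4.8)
The plan is to invoke the point--line incidence relation in the image plane: a homogeneous image point $p$ lies on a homogeneous line $l$ if and only if $l^{T}p=0$. Hence the proposition reduces to checking that $l^{\prime T}p_{inf}=0$, where $l^{\prime}=EX=[t]_{\times}RX$ is the epipolar line from \eqref{epipolarline} and $p_{inf}=RX/[RX]_{3}$ is the virtual infinity point from \eqref{infpts}.

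First I would substitute both definitions into the scalar $l^{\prime T}p_{inf}$. Since $[RX]_{3}$ is a nonzero scalar --- which is precisely what makes $p_{inf}$ well defined as a homogeneous coordinate --- it factors out of the product, so that $l^{\prime T}p_{inf}=0$ is equivalent to $(RX)^{T}[t]_{\times}(RX)=0$.

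The key step is then the elementary fact that $[t]_{\times}$ is skew-symmetric: for any vector $v$, the scalar $v^{T}[t]_{\times}v$ equals its own transpose and hence its own negative, so it must vanish. Applying this with $v=RX$ closes the argument. Equivalently, one may simply observe that $[t]_{\times}(RX)=t\times(RX)$ is orthogonal to $RX$ by the defining property of the cross product, which gives the same conclusion.

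I do not anticipate a genuine obstacle here; the whole verification is a one-line computation. The only points worth stating explicitly are the standing assumption $[RX]_{3}\neq 0$ (without which $p_{inf}$ is not defined), and the identification of ``lying on the epipolar line'' with the algebraic incidence condition $l^{\prime T}p=0$; once those are in place the skew-symmetry of $[t]_{\times}$ does all the work.
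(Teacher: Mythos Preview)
Your argument is correct and mirrors the paper's: both substitute the definitions of $l^{\prime}$ and $p_{inf}$ and reduce the claim to the vanishing of $(RX)^{T}[t]_{\times}(RX)$ via the skew-symmetry of $[t]_{\times}$ (equivalently, orthogonality of the cross product to its factors). The paper phrases the incidence test as $p_{inf}\times l^{\prime}=0$ rather than the scalar condition $l^{\prime T}p_{inf}=0$ that you use, but the underlying one-line computation is identical, and your formulation is the standard one.
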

\begin{proof} with \eqref{epipolarline} and \eqref{infpts}, we have
	\begin{align}
	p_{inf}\times l^{'}=(\frac{RX}{[RX]_{3}})\times ([t]_{\times}RX)=0
	\label{infonline}
	\end{align}
	Q.E.D.
\end{proof}
The epipole $p_e$ on the right image is computed by \cite{hartley2003multiple}
\begin{align}
p_e=\frac{t}{[t]_{3}}
\label{epipole}
\end{align}
where $[t]_3$ is the third row of $t$.

Specifically, $p_{inf}$ is at infinity when $[RX]_{3}=0$ and 
$p_e$ is at infinity when $[t]_{3}=0$.

Referring to Fig.~\ref{epipolarsegment}, $C$ and $C^{'}$ are two camera centers. The projection rays $\overrightarrow{CX}$ and $\overrightarrow{C'p_{inf}}$ are parallel and intersect at the plane at infinity. It is the reason we name $p_{inf}$ as the virtual infinity point. 

\subsection{Segmentation of Epipolar Line}
It is well known that the epipolar line passes through the epipole. In Sec. 3.A, we prove that the virtual infinity point lies on the epipolar line as well. Therefore, the epipolar line are separated into three segments by the virtual infinity point and the epipole, as demonstrated in Fig.~\ref{epipolarsegment}. Without loss of generality, it assumes that $p_{inf}$ lies on the right side of $p_e$. It can be inferred that the correct matching point should lie on one of the three segments for finite epipole and finite virtual infinity point. The epipolar line would be separated into two segments for the special case $[RX]_{3}=0$ or  $[t]_{3}=0$.
\subsection{Cheirality on Epipolar Line}
Referring to Fig.~\ref{epipolarsegment}, we assume $\widetilde{X'}$ represents one candidate on the epipolar line and $\widetilde{X_w}$ represents the intersection of $\overrightarrow{CX}$ and $\overrightarrow{C'\widetilde{X'}}$. As illustrated by the green dot, if $\widetilde{X_w}$ is in front of both cameras, $\widetilde{X'}$ can be a possible solution to the matching point of $X$. As illustrated by the red dots, if $\widetilde{X_w}$ is behind one of the two cameras, the cheirality constraint is violated and then $\widetilde{X'}$ is not a feasible solution. That means that the cheirality constraint helps us remove infeasible solutions to the matching point on the condition that the depth can be obtained.

\section{Identifying Right Epipolar Segment}
Referring to \eqref{epipolarconstraint}, $X^{'}$ can be represented as 
\begin{align}
X'=\lambda_1 RX+\lambda_2 t
\label{PPO2}
\end{align}
where $\lambda_1=\frac{z_{w}}{z^{'}_{w}}$  and $\lambda_2=\frac{1}{z^{'}_{w}}$. It is well known that if $z_{w}>0$,$z^{'}_{w}>0$ in \eqref{epipolarconstraint}, the image points pair $(X,X^{'})$ satisfy with the cheirality constraint. If $\lambda_1>0$,$\lambda_2>0$, we have $z_{w}>0$,$z^{'}_{w}>0$. Therefore \eqref{PPO2} constrains any point pair to fit the real imaging geometry when $\lambda_1>0$ and  $\lambda_2>0$. 
The relative position on epipolar line of correct matching point $X'$, virtual infinity point $p_{inf}$ and epipole $p_{e}$ is decided by the sign of $[RX]_3$ and $[t]_3$.
\begin{mypro}
	
	\centerline{$[RX]_3>0$,$[t]_3>0$, $X'$ is between $p_{inf}$ and $p_e$}
	
	\centerline{$[RX]_3>0$,$[t]_3<0$, $p_{inf}$ is between $X'$ and $p_e$}
	
	\centerline{$[RX]_3<0$,$[t]_3>0$, $p_e$ is between $p_{inf}$ and $X'$}
	\label{pro2}
\end{mypro}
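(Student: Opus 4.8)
The plan is to rewrite the decomposition \eqref{PPO2} of $X'$ directly in terms of $p_{inf}$ and $p_e$ and then read off the relative position of $X'$ from the signs of the resulting coefficients. First I would use the definitions \eqref{infpts} and \eqref{epipole} to write $RX = [RX]_3\, p_{inf}$ and $t = [t]_3\, p_e$ (this is the non-degenerate case $[RX]_3 \neq 0$, $[t]_3 \neq 0$ that the statement concerns), and substitute into \eqref{PPO2}:
\begin{equation}
X' = \lambda_1 RX + \lambda_2 t = \big(\lambda_1 [RX]_3\big)\, p_{inf} + \big(\lambda_2 [t]_3\big)\, p_e ,
\end{equation}
i.e.\ $X' = \alpha\, p_{inf} + \beta\, p_e$ with $\alpha = \lambda_1[RX]_3$ and $\beta = \lambda_2[t]_3$.

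The key step is to observe that $X'$, $p_{inf}$ and $p_e$ are all normalized image points, so each has third homogeneous coordinate equal to $1$. Taking the third component of the displayed identity then yields $\alpha + \beta = 1$; that is, the expression for $X'$ is an \emph{affine} combination of $p_{inf}$ and $p_e$. By the elementary description of affine combinations on a line (parametrize it as $p_e + s(p_{inf}-p_e)$, so that $X'$ sits at $s = \alpha$, with $p_e$ at $s=0$ and $p_{inf}$ at $s=1$), it follows that $X'$ lies strictly between $p_{inf}$ and $p_e$ exactly when $\alpha,\beta \in (0,1)$; that $p_{inf}$ lies between $X'$ and $p_e$ exactly when $\alpha > 1$ (equivalently $\beta < 0$); and that $p_e$ lies between $p_{inf}$ and $X'$ exactly when $\beta > 1$ (equivalently $\alpha < 0$).

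It then only remains to translate the signs of $\alpha$ and $\beta$ into signs of $[RX]_3$ and $[t]_3$. Since the cheirality constraint gives $\lambda_1 = z_w/z_w' > 0$ and $\lambda_2 = 1/z_w' > 0$, we have $\operatorname{sign}(\alpha) = \operatorname{sign}([RX]_3)$ and $\operatorname{sign}(\beta) = \operatorname{sign}([t]_3)$. Combined with $\alpha + \beta = 1$ this gives exactly the three listed cases: $[RX]_3>0,\ [t]_3>0 \Rightarrow \alpha,\beta\in(0,1)$; $[RX]_3>0,\ [t]_3<0 \Rightarrow \beta<0$, hence $\alpha>1$; $[RX]_3<0,\ [t]_3>0 \Rightarrow \alpha<0$, hence $\beta>1$. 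The fourth sign pattern $[RX]_3<0,\ [t]_3<0$ cannot occur, since it would force $\alpha+\beta<0\neq 1$, which is precisely why only three cases are listed.

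The main obstacle I anticipate is not computational but a matter of care: (i) making the betweenness reading of the affine coefficients rigorous rather than merely pictorial, and (ii) being explicit that the degenerate configurations $[RX]_3=0$ or $[t]_3=0$ are deliberately excluded here — they send $p_{inf}$ or $p_e$ to infinity and correspond to the two-segment special case noted in Sec.~3.B. Everything else reduces to tracking the sign of the product of a known-positive scalar with $[RX]_3$ or $[t]_3$.
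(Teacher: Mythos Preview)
Your proposal is correct and essentially identical to the paper's own proof: both substitute \eqref{infpts} and \eqref{epipole} into \eqref{PPO2} to obtain $X'=\alpha\,p_{inf}+\beta\,p_e$ with $\alpha+\beta=1$, use the positivity of $\lambda_1,\lambda_2$ to identify $\operatorname{sign}(\alpha)=\operatorname{sign}([RX]_3)$ and $\operatorname{sign}(\beta)=\operatorname{sign}([t]_3)$, and then read off betweenness from the signs. The only cosmetic difference is that the paper rearranges \eqref{ab3} into parallel-vector statements (e.g.\ $a(X'-p_{inf})=b(p_e-X')$) rather than invoking the affine parametrization $s\mapsto p_e+s(p_{inf}-p_e)$, but these are the same argument.
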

\begin{proof}
	Substituting \eqref{infpts} and \eqref{epipole} into \eqref{PPO2} gives
	\begin{equation}
	\begin{aligned}
	X'&=\lambda_1 [RX]_{3}\frac{RX}{[RX]_3}+\lambda_2 [t]_3\frac{t}{[t]_3} \\
	&=\lambda_1 [RX]_{3}p_{inf}+\lambda_2[t]_{3}p_{e} \\
	&=ap_{inf}+bp_e  
	\end{aligned}
	\label{ab3}
	\end{equation}
	
	where $a=\lambda_1[RX]_{3}$, $b=\lambda_2[t]_3$. Because $X^{'},p_{inf}$ and $p_e$ are represented in homogeneous coordinates, we have $a+b=1$. Specifically, if $\lambda_1=0$, then $b=1,X^{'}=p_e$, and if $\lambda_2=0$, $a=1,X^{'}=p_{inf}$. For $\lambda_1>0,\lambda_2>0$, $a$ and $b$ have the same signs with $[RX]_3$ and $[t]_3$, respectively. Then \eqref{ab3} can be transformed as such
\begin{gather}
(a+b)X^{'}=ap_{inf}+bp_e \\
a(X^{'}-p_{inf})=b(p_e-X^{'})
\label{PPOproof1}
\end{gather}  
When $a>0,b>0$, \eqref{PPOproof1} indicates that $\overrightarrow{p_{inf}X^{'}} \parallel \overrightarrow{X^{'}p_e}$. Considering that $X'$, $p_{inf}$ and $p_e$ is colinear, we can infer that $X'$ is between $p_{inf}$ and $p_e$. Furthermore, \eqref{ab3} can also be transformed into
\begin{gather}
X^{'}=(1-b)p_{inf}+bp_e \\
-b(p_e-p_{inf})=p_{inf}-X^{'}
\end{gather}
When $b<0$, it means $\overrightarrow{p_{inf}p_e} \parallel \overrightarrow{X^{'}p_{inf}}$, $p_{inf}$ is between $X'$ and $p_e$. Finally, \eqref{ab3} can also be transformed as such
\begin{gather}
X^{'}=ap_{inf}+(1-a)p_e \\
-a(p_{inf}-p_e)=p_e-X^{'}
\end{gather}
When $a<0$, we $\overrightarrow{p_ep_{inf}} \parallel \overrightarrow{X^{'}p_e}$, alternatively, $p_e$ is between $p_{inf}$ and $X'$.
Note that the condition $a<0,b<0$ does not occur because $a+b=1$.

Q.E.D.
\end{proof}

According to Sec. 3.B the epipolar line are separated into two segments for the special case $[RX]_{3}=0$ or $[t]_3=0$. Specifically, when $[RX]_3=0$ and $[t]_3\not=0$, \eqref{PPO2} can be transformed into 
\begin{equation}
\begin{aligned}
X'&=\lambda_1 RX+\lambda_2 [t]_3\frac{t}{[t]_3} \\
&=\lambda_1 RX+\lambda_2[t]_{3}p_{e} \\
&=\lambda_1 RX+bp_e  
\end{aligned}
\label{b}
\end{equation}
As $[X^{'}]_{3}=1,[RX]_{3}=0,[p_{e}]_{3}=1$, we have $b=1$. Therefore, $X^{'}-p_{e}=\lambda_1RX$, indicating the relative position of $X^{'}$ and $p_{e}$ on the epipolar line is only up to the first two rows of $RX$ ($\lambda_1>0$). For instance, if $[RX]_1>0$, $X^{'}$ lies to the right of $p_e$ in the x-axis direction of the image plane.
When $[t]_3=0$ and $[RX]_3\not=0$, \eqref{PPO2} can be transformed into
\begin{equation}
\begin{aligned}
X'&=\lambda_1 [RX]_{3}\frac{RX}{[RX]_3}+\lambda_2 t \\
&=\lambda_1 [RX]_{3}p_{inf}+\lambda_2 t \\
&=ap_{inf}+\lambda_2 t 
\end{aligned}
\label{a}
\end{equation} 
Because $[X^{'}]_{3}=1,[RX]_{3}=1,[p_{e}]_{3}=0$, we have $a=1$. Therefore, $X^{'}-p_{inf}=\lambda_2 t$ that means the relative position of $X^{'}$ and $p_{inf}$ on the epipolar line is only up to the first two rows of $t$  ($\lambda_2>0$).

When $[t]_3=0$ and $[RX]_3=0$, the third row of the right side of \eqref{PPO2} is equal to 0. As $X^{'}$ is represented in homogeneous coordinates, the left side of \eqref{PPO2} is equal to 1. Then the condition $[t]_3=0$ and $[RX]_3=0$ does not occur.

It should be noted that the actual epipolar line segment is the intersection of the segment we choose and the image plane in practice. Hartley \cite{hartley2003multiple} points out the search space can be reduced to a bounded line, but our method analytically proves that the correct
matching point can be located in any of the three segments. It is distinctive from the common sense that the correct matching point lies between the epipole and virtual infinity point.

\section{Experimental Results}
Firstly, we examine the theory of epipolar line segments on four pairs of images. The first and second pair have a small baseline. The third and fourth pair have a large baseline. The relative pose of images is already known and the SIFT features are used.

\subsection{Small-baseline test}
Five pairs of feature points are randomly chosen and highlighted as green dots in both images, as seen in Fig.~\ref{smallbaseline} and Fig.~\ref{smallbaseline1}. With the known relative pose, the epipolar line for each chosen point in the left image is plotted in the second image. All virtual infinity points lie on the corresponding epipolar lines. Blue rays are used to represent the right segment on the epipolar line and red rays are used to represent the wrong segments, identified according to Proposition~\ref{pro2}. We see that all the matching points in the second image fall on the right epipolar segment. It should be noted that the epipole in the second image is out of view. Therefore, the epipolar lines have only two visible segments and the correct matching points are close to the corresponding virtual infinity points. 
\begin{figure}[htbp]
	\centering
	\includegraphics[width=9cm,height=4.5cm]{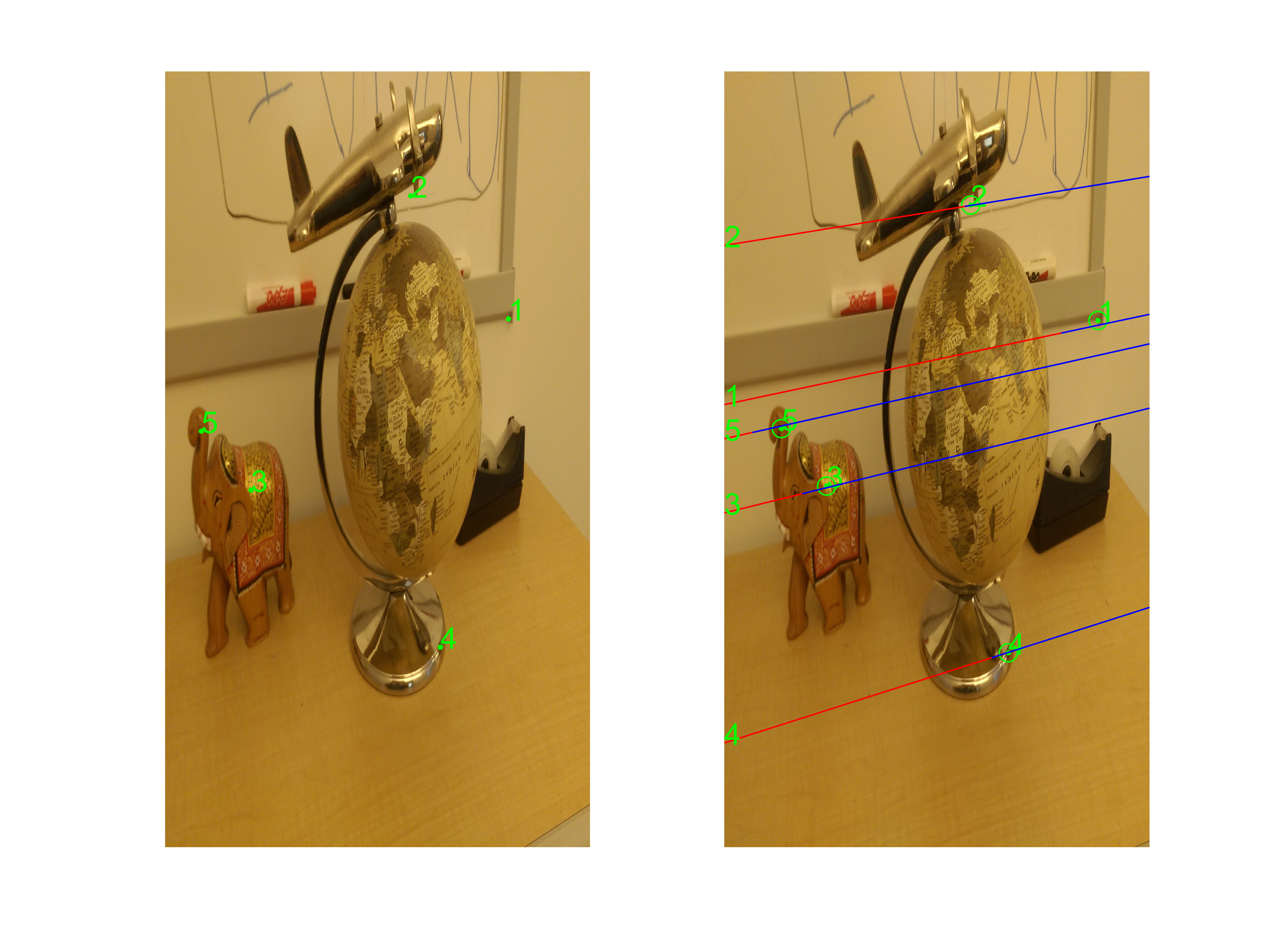}
	\caption{The first pair of images with small baseline}
	\label{smallbaseline}
\end{figure}

\begin{figure}[htbp]
	\centering
	\includegraphics[width=9cm,height=4.5cm]{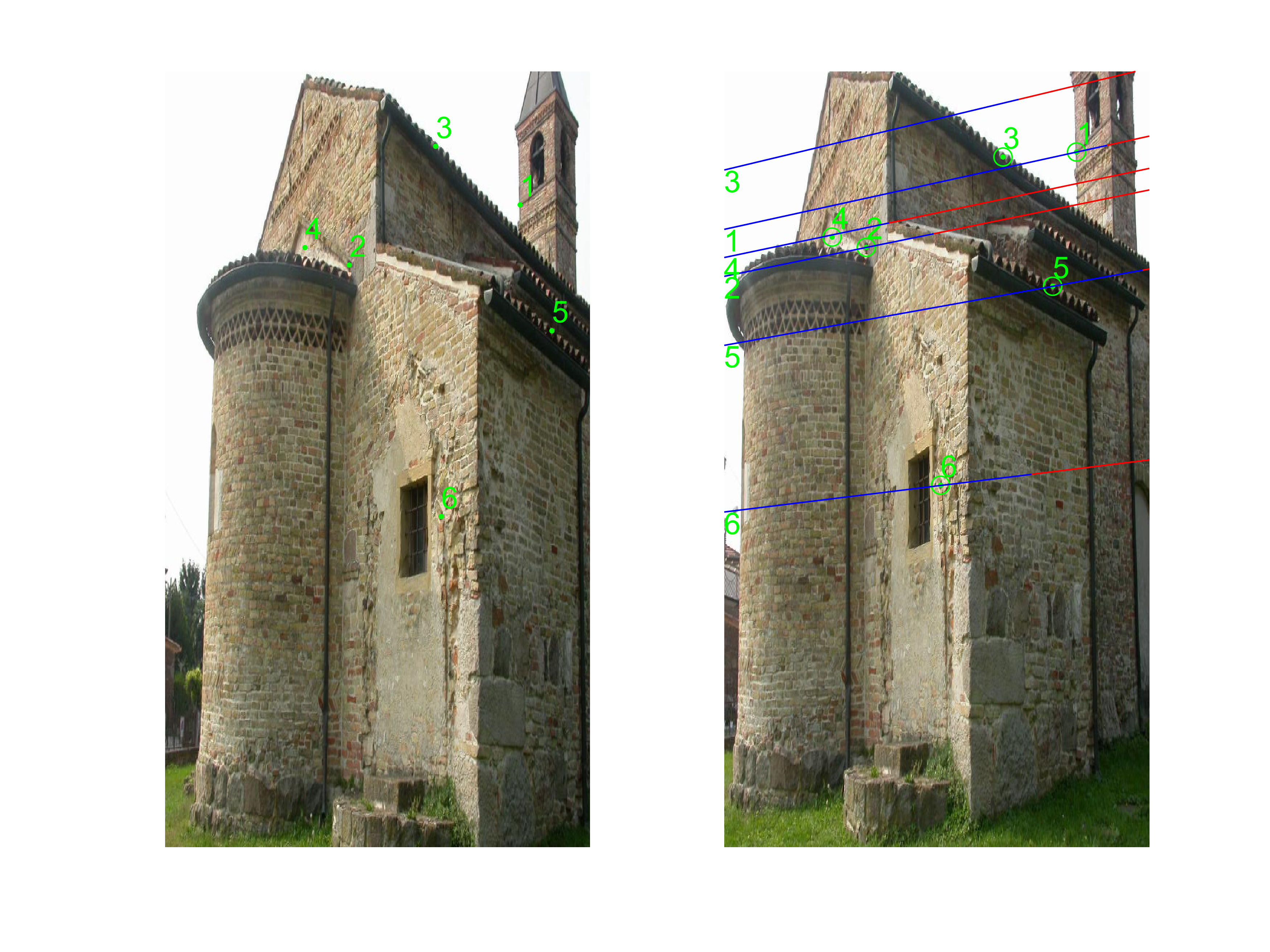}
	\caption{The second pair of images with small baseline}
	\label{smallbaseline1}
\end{figure}

\subsection{Large-baseline test}
In this case, the epipole in the second image falls in the image and all the epipolar lines intersect at the epipole as Fig.~\ref{largebaseline} illustrates. The epipole and virtual infinity points divide the epipolar lines into three segments and all correct matching points fall on the identified segments. Note that point $\#1$ and point $\#3$ in the right image are clearly false matches and stay far away from the epipolar lines of point $\#1$ and point $\#3$ in the left image. In Fig.~\ref{largebaseline1}, the epipole in the second image is out of view. We see that all the matching points in the second image fall on the right epipolar segment. Note that point $\#5$ in the right image are clearly false matches and stay far away from the epipolar lines of point $\#5$ in the left image.
\begin{figure}
	\centering
	\includegraphics[width=9cm,height=4.5cm]{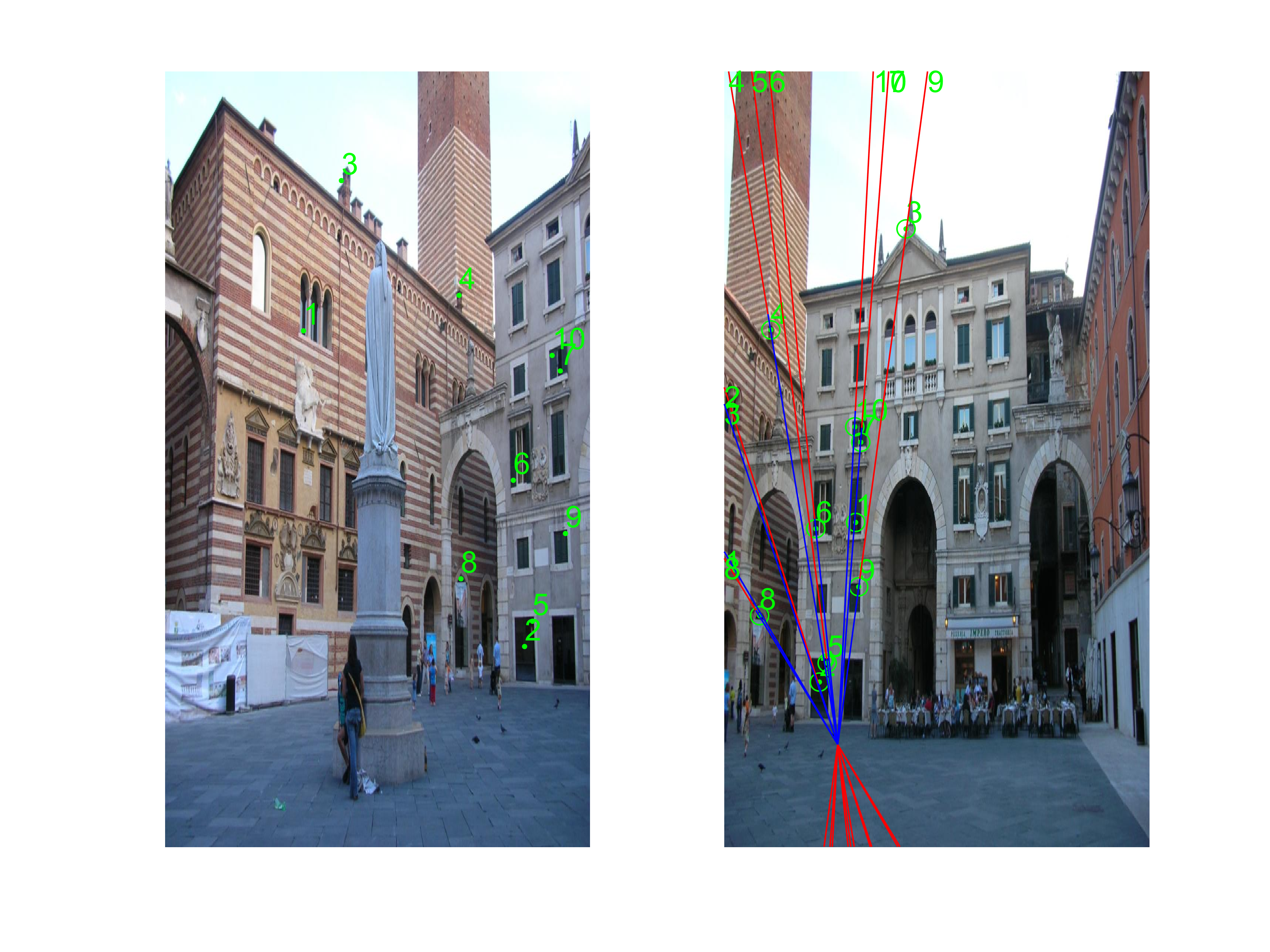}
	\caption{The first pair of images with large baseline}
	\label{largebaseline}
\end{figure}

\begin{figure}
	\centering
	\includegraphics[width=9cm,height=4.5cm]{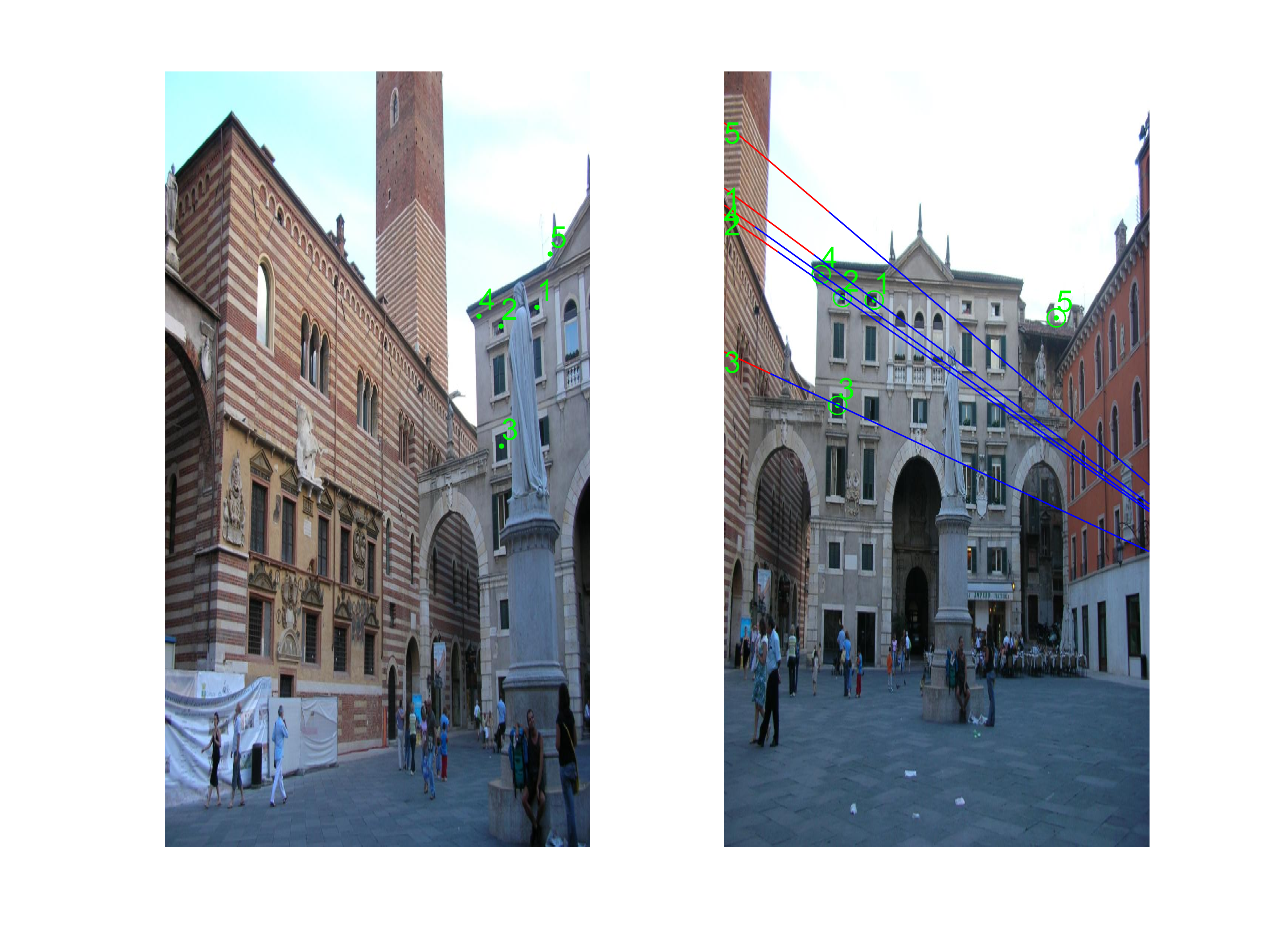}
	\caption{The second pair of images with large baseline}
	\label{largebaseline1}
\end{figure}
\subsection{Simulation test}
Additionally, we first simulate two-view geometry in different poses and calculate how much search space on the epipolar line can be reduced. The size of simulated image is set to $100\times100$ pixels. The intrinsic matrix of the two simulated cameras are both set to  
$
\left[
\begin{matrix}
50 & 0 & 50 \\
0 & 50 & 50 \\
0 & 0 & 1
\end{matrix}
\right] 
$.

\begin{figure}[htbp]
	\centering
	\subfigure[Adjacent cameras]{
		\includegraphics[width=9cm]{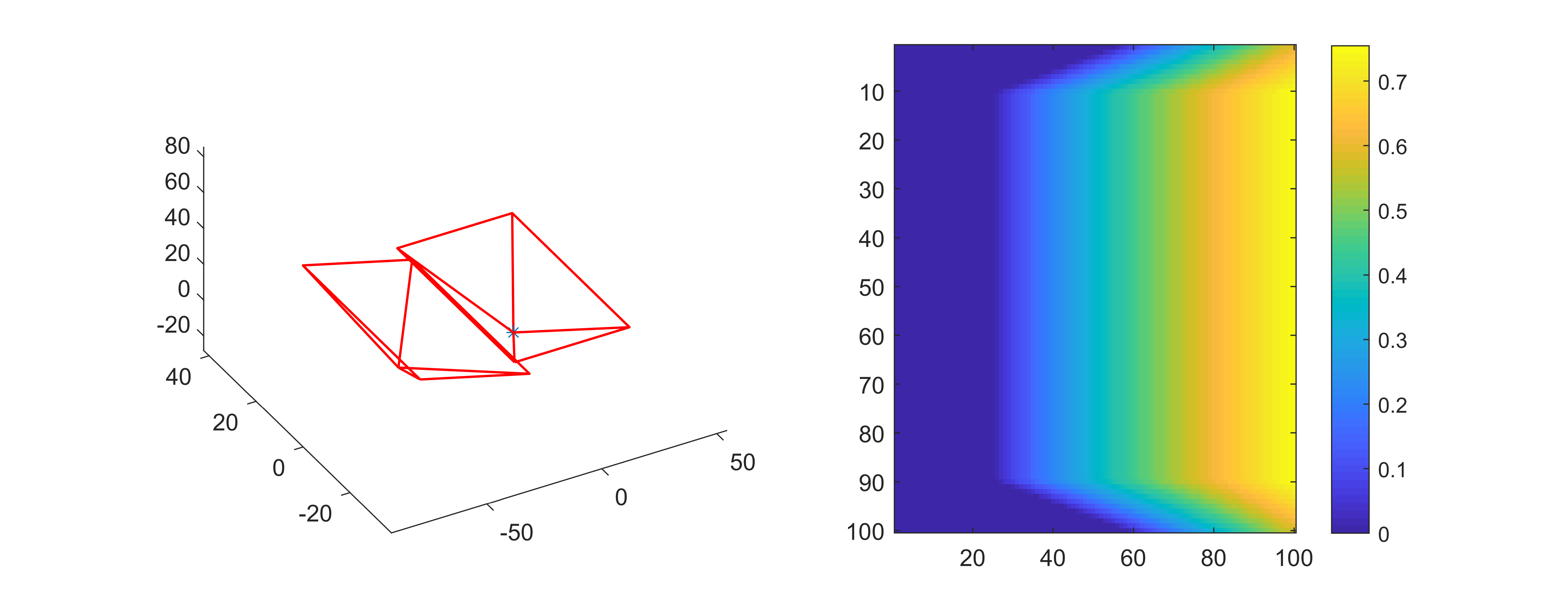}}
	\subfigure[Face-to-face cameras]{
		\includegraphics[width=9cm]{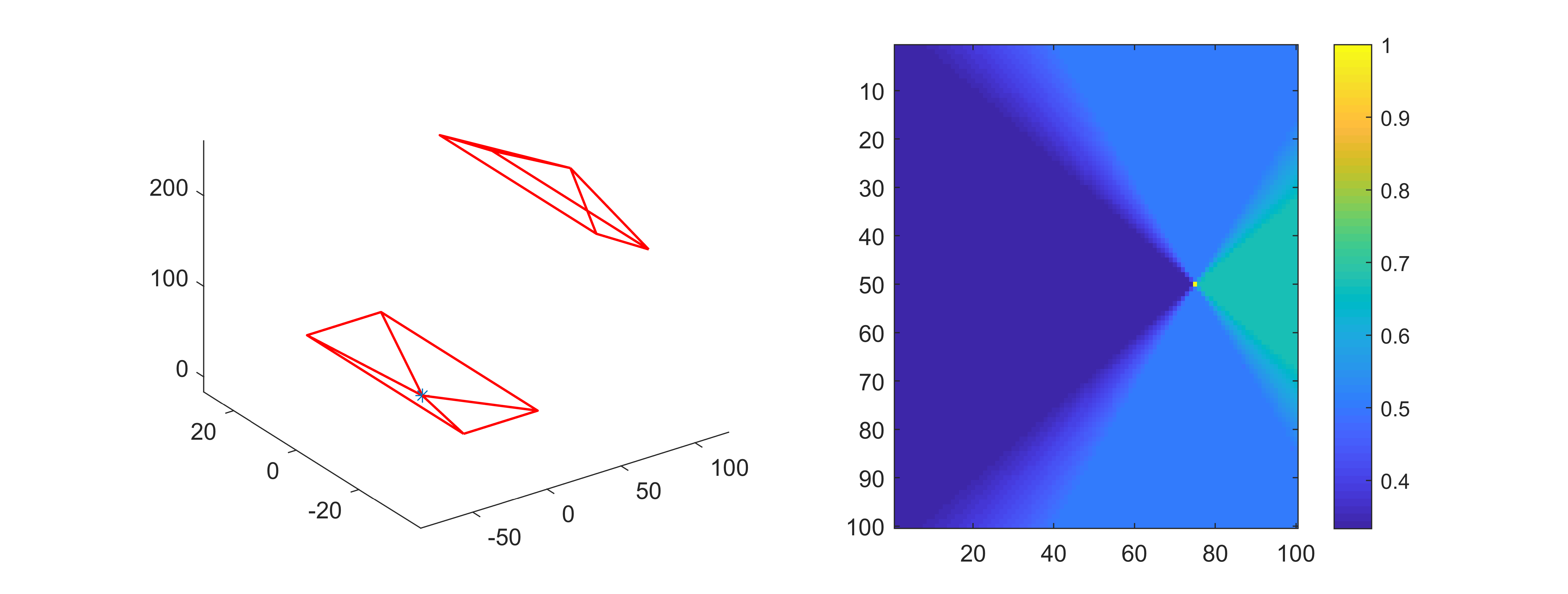}
	}
	\subfigure[Pure translation cameras]{
		\includegraphics[width=9cm]{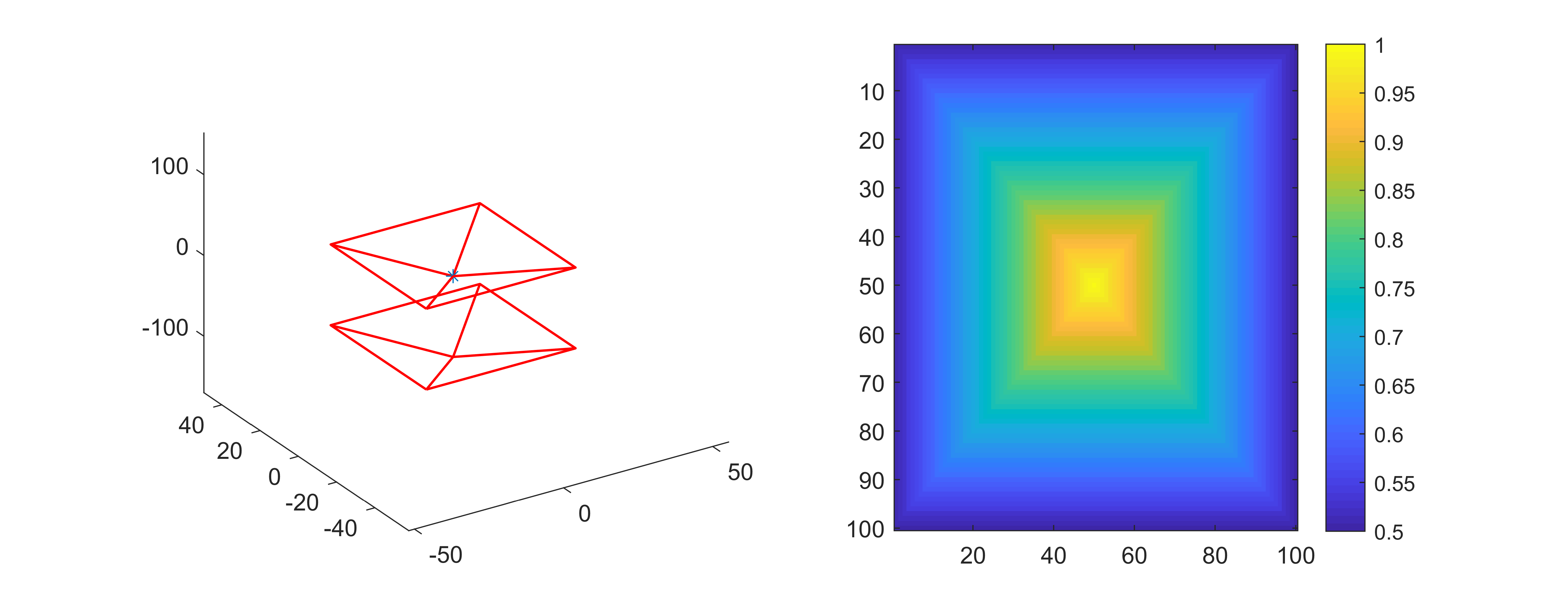}
	}
	\subfigure[Pure translation cameras2]{
		\includegraphics[width=9cm]{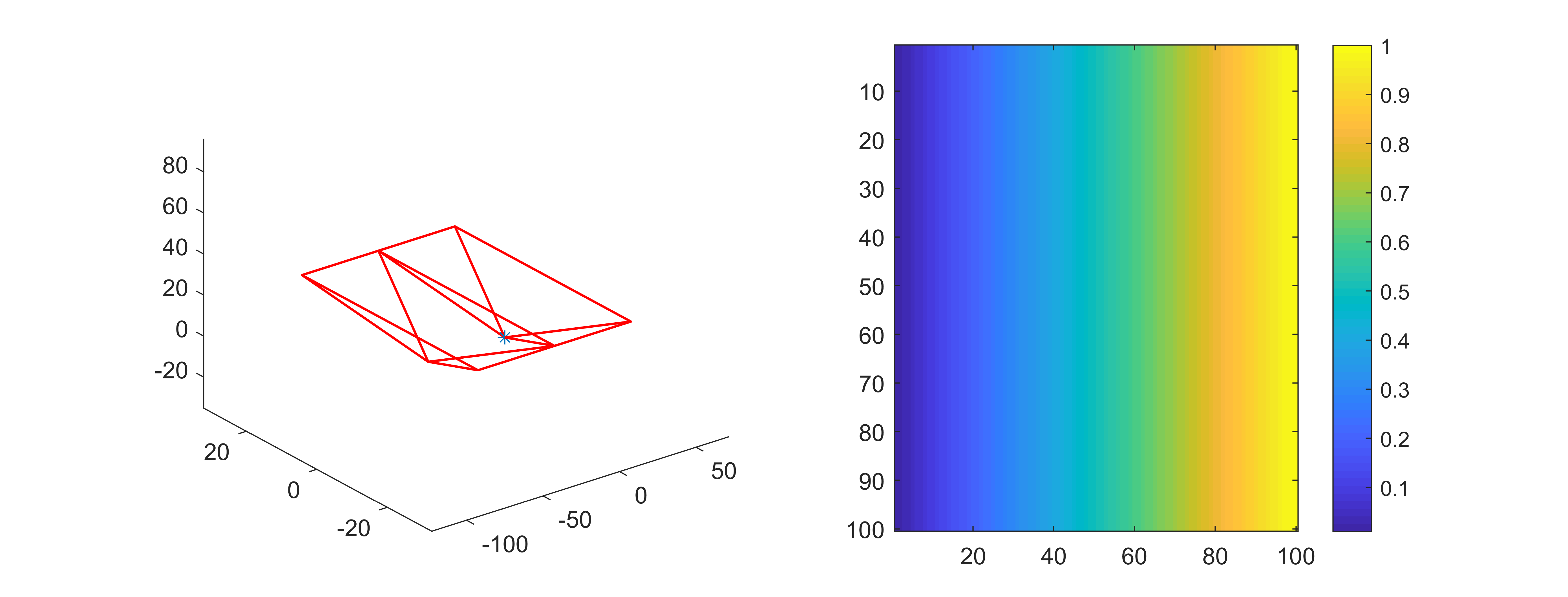}
	}
	\caption{4 types of relative pose of two view geometry and reduced search space for each image point}
	\label{simulatedcameras}
\end{figure}

The left subfigures in Fig.~\ref{simulatedcameras} demonstrates the relative pose of two camera. The optical center with a star is the first camera. The right subfigures show the reduced search space percentage for each image point by the advantage of identifying the right epipolar segment. Table~\ref{table1} lists the average search space reduced for four representative image pairs. On average, the right epipolar segment can help reduce the search space by about $50\%$ in contrast to the whole epipolar line.

\begin{table}
	\begin{center}
		\caption{REDUCED SEARCH SPACE FOR EACH IMAGE PAIR}
		\label{table1}
		\begin{tabular}{lll}
			\hline\noalign{\smallskip}
			pair & \quad reduced space(percentage) \\
			\noalign{\smallskip}
			\hline
			\noalign{\smallskip}
			\quad a  & \quad\quad\quad\quad\quad$31.07\%$\\
			\quad b & \quad\quad\quad\quad\quad$42.92\%$\\
			\quad c & \quad\quad\quad\quad\quad$66.67\%$\\
			\quad d & \quad\quad\quad\quad\quad$50.50\%$
			\\
			\hline
		\end{tabular}
	\end{center}
\end{table}
\setlength{\tabcolsep}{1.4pt}

\section{Conclusions}
This paper finds that for a known relative pose the epipolar line can be divided into two or three segments using the epipole and the newly-defined virtual infinity point. Hopefully, this fact can be used to significantly reduce the search space by about $50\%$ in finding feature correspondence. The epipolar line segment can also be used to identify the outliers of feature matching. Simulation and test results are reported to verify the analysis and the proposed algorithm. This paper discusses the problem of segmenting epipolar line in the calibrated case, the uncalibrated case would be discussed in the future work.

\bibliographystyle{ieeetr}
\bibliography{egbib}

\begin{thebibliography}{10}

\bibitem{lowe2004distinctive}
D.~G. Lowe, ``Distinctive image features from scale-invariant keypoints,'' {\em
  International Journal of Computer Vision}, vol.~60, no.~2, pp.~91--110, 2004.

\bibitem{bay2006surf}
H.~Bay, T.~Tuytelaars, and L.~Van~Gool, ``Surf: Speeded up robust features,''
  in {\em European Conference on Computer Vision}, pp.~404--417, Springer,
  2006.

\bibitem{hartley2003multiple}
R.~Hartley and A.~Zisserman, {\em Multiple view geometry in computer vision}.
\newblock Cambridge University Press, 2003.

\bibitem{nister2004efficient}
D.~Nist{\'e}r, ``An efficient solution to the five-point relative pose
  problem,'' {\em IEEE transactions on pattern analysis and machine
  intelligence}, vol.~26, no.~6, pp.~756--770, 2004.

\bibitem{stewenius2006recent}
H.~Stewenius, C.~Engels, and D.~Nist{\'e}r, ``Recent developments on direct
  relative orientation,'' {\em ISPRS Journal of Photogrammetry and Remote
  Sensing}, vol.~60, no.~4, pp.~284--294, 2006.

\bibitem{longuet1981computer}
H.~C. Longuet-Higgins, ``A computer algorithm for reconstructing a scene from
  two projections,'' {\em Nature}, vol.~293, no.~5828, pp.~133--135, 1981.

\bibitem{fusiello2000compact}
A.~Fusiello, E.~Trucco, and A.~Verri, ``A compact algorithm for rectification
  of stereo pairs,'' {\em Machine Vision and Applications}, vol.~12, no.~1,
  pp.~16--22, 2000.

\bibitem{oram2001rectification}
D.~Oram, ``Rectification for any epipolar geometry.,'' in {\em BMVC}, vol.~1,
  pp.~653--662, Citeseer, 2001.

\bibitem{pollefeys1999simple}
M.~Pollefeys, R.~Koch, and L.~Van~Gool, ``A simple and efficient rectification
  method for general motion,'' in {\em Proceedings of the Seventh IEEE
  International Conference on Computer Vision}, vol.~1, pp.~496--501, IEEE,
  1999.

\bibitem{qian1997binocular}
N.~Qian, ``Binocular disparity and the perception of depth,'' {\em Neuron},
  vol.~18, no.~3, pp.~359--368, 1997.

\bibitem{gonzalez1998neural}
F.~Gonzalez and R.~Perez, ``Neural mechanisms underlying stereoscopic vision,''
  {\em Progress in neurobiology}, vol.~55, no.~3, pp.~191--224, 1998.

\bibitem{blumenthal2014high}
D.~C. Blumenthal-Barby and P.~Eisert, ``High-resolution depth for binocular
  image-based modeling,'' {\em Computers \& Graphics}, vol.~39, pp.~89--100,
  2014.

\bibitem{10.1023/A:1007984508483}
R.~I. Hartley, ``Chirality,'' {\em International Journal of Computer Vision},
  vol.~26, p.~41–61, Jan. 1998.

\bibitem{werner2001cheirality}
T.~Werner and T.~Pajdla, ``Cheirality in epipolar geometry,'' in {\em
  Proceedings Eighth IEEE International Conference on Computer Vision. ICCV
  2001}, vol.~1, pp.~548--553, IEEE, 2001.

\bibitem{agarwal2020multiview}
S.~Agarwal, A.~Pryhuber, R.~Sinn, and R.~R. Thomas, ``Multiview chirality,''
  {\em arXiv preprint arXiv:2003.09265}, 2020.

\bibitem{wang2000svd}
W.~Wang and H.~T. Tsui, ``A svd decomposition of essential matrix with eight
  solutions for the relative positions of two perspective cameras,'' in {\em
  Proceedings 15th International Conference on Pattern Recognition. ICPR-2000},
  vol.~1, pp.~362--365, IEEE, 2000.

\end{thebibliography}

\end{document}